\documentclass[11pt]{article}

\usepackage[margin=0.9in]{geometry}
\usepackage[T1]{fontenc}
\usepackage{lmodern}
\usepackage{microtype}
\usepackage{amsmath,amssymb,amsthm,mathtools}
\usepackage{etoolbox}
\usepackage[colorlinks=true,linkcolor=black,urlcolor=blue,citecolor=red]{hyperref}

\AtBeginEnvironment{thebibliography}{%
  \setlength{\itemsep}{0pt}%
  \setlength{\parsep}{0pt}%
  \setlength{\parskip}{0pt}%
}
\pretocmd{\bibitem}{\vspace{-1pt}}{}{}

\allowdisplaybreaks[2]

\theoremstyle{plain}
\newtheorem{theorem}{Theorem}
\newtheorem{lemma}{Lemma}
\newtheorem{proposition}{Proposition}
\newtheorem{corollary}{Corollary}
\theoremstyle{remark}

\DeclareMathOperator*{\argmax}{arg\,max}
\newcommand{\E}{\mathbb E}
\newcommand{\Pp}{\mathbb P}
\newcommand{\1}{\mathbf 1}
\newcommand{\Upd}{\mathsf U}

\title{Thompson Sampling Is $2$-Competitive for Mistakes}
\author{Mark Sellke \and Gregory Valiant}
\date{}

\begin{document}
\maketitle

\begin{abstract}
\noindent
We consider Bayesian bandit models and prove that Thompson sampling makes at most twice the expected number of \emph{mistakes} (selections of a suboptimal arm) as any other policy.
Our analysis applies as long as the latent arm processes are independent and each arm evolves only when played.
For stochastic bandits with best arm defined via mean reward, this confirms a conjecture of \cite{guha-munagala-2014}, where the factor $2$ is already best possible. 
The result holds under any nonincreasing sequence of round weights, including fixed horizon and geometric discounting.
\end{abstract}

\section{Introduction}

In the classical stochastic bandit model, a learner repeatedly
chooses from a fixed collection of arms.  Each play of arm \(i\) produces an
independent draw from an unknown reward distribution with mean \(\mu_i\), and
the learner observes only the reward from the chosen arm.  
The usual objective is to maximize expected cumulative reward, typically under the guise of minimizing the regret compared to the best fixed action
\cite{lai-robbins-1985,auer-cesabianchi-fischer-2002,bubeck-cesabianchi-2012,lattimore-szepesvari-2020}.
Our objective in this paper is instead to minimize the number of \emph{mistake} rounds in which the designated best arm is not chosen.
We follow a Bayesian model in which the unknown arms are
drawn independently from specified priors.  
Given such a prior, the well-studied Thompson sampling algorithm draws, at each time $t$, a fresh posterior sample and plays the best arm prescribed by it.  
\cite{guha-munagala-2014} conjectured that the expected mistake count of Thompson sampling is at most twice the (Bayesian optimal) mistake-minimizing policy, for any fixed time horizon. 
They proved this claim in the case of two arms, where the factor of $2$ is already unimprovable.

In this paper we prove this conjecture, and show that it in fact holds in much greater generality.
Namely each arm has a random real-valued score and a feedback sequence, whose
$k$-th element is revealed on the arm's $k$-th play, and the arms are independent
under the prior.
In the special case that each arm has a latent reward distribution, its
feedback sequence consists of conditionally IID draws from that distribution,
and its score is the distribution's mean, we recover the stochastic bandit
setting of \cite{guha-munagala-2014}.
By averaging over the time horizon, the result easily extends to any decreasing discount rate (the most notable being geometric).

Our argument proceeds by establishing a $1$-step estimate which allows us to compare the Bellman recursions for the optimal policy and for Thompson sampling. 
The estimate is proved by analyzing a counterfactual simulation scheme; the crucial step is a careful swapping of unobserved arms between two independent copies of the world.

\paragraph{Related work.}
\cite{thompson-1933} introduced posterior sampling in 1933, and
\cite{chapelle-li-2011} later demonstrated its strong empirical performance.
Theoretical work since then has mainly studied regret.  The literature includes
finite-time analyses for stochastic bandits
\cite{agrawal-goyal-2017,kaufmann-korda-munos-2012}, extensions to structured
and incentive-constrained problems
\cite{russo-vanroy-2013-posterior,agrawal-goyal-2013-linear,hamidi-bayati-2020,sellke-slivkins-2020},
and the information-ratio method, which also connects posterior sampling to
mirror descent
\cite{russo-vanroy-2014,russo-vanroy-2014-ids,lattimore-gyorgy-2020}.
These regret analyses compare the learner with an oracle that knows the
environment and always selects an optimal arm.  A separate literature studies
Bayes-optimal adaptive policies.  For geometrically discounted Bayesian bandits
with independent arms that evolve only when played, the Gittins theorem
identifies the policy that chooses an arm of largest Gittins index as optimal
\cite{gittins-1979,gittins-glazebrook-weber-2011,scully-terenin-2025}.
For finite horizons, \cite{guha-munagala-2013-approx} gives polynomial-time
constant-factor approximations to Bayes-optimal reward-maximizing policies,
including side-constrained and explore-then-exploit variants.

Evaluating learning algorithms by counting their mistakes is a well-studied setting in online classification, where
the learner predicts labels for a sequence of presented examples \cite{littlestone-1988}.  Later work studies bandit feedback
\cite{daniely-helbertal-2013}, optimal randomized prediction
\cite{filmus-hanneke-mehalel-moran-2023}, and self-directed learners that choose
the examples adaptively \cite{devulapalli-hanneke-2024}.  These settings reveal
the correct label or at least whether each prediction was correct. For us, a play
reveals only feedback from the selected arm, not whether that arm was best; our
guarantee compares Bayesian expected mistakes multiplicatively with the
Bayes-optimal adaptive policy.

\paragraph{Acknowledgement}

M.S. thanks Janardhan Kulkarni and Kamesh Munagala for introducing him to this problem.
The proof in this paper is due to an early version of GPT-5.6.

\section{Model and results}

Our model assumes independence across arms but permits arbitrary dependence
between the score and observations of a single arm.  A fixed tie-breaking rule
designates one of the arms with largest score as best.  The observation streams
determine what the learner can discover, and each play reveals the next
observation in the chosen arm's stream.

\subsection{Independent arms}

\paragraph{Scores and observation sequences.}

We take \(\mathcal I\subseteq\mathbb N\) to be a nonempty finite or countably
infinite set of arms.  For each arm \(i\), let \(\mathsf O_i\) be a standard
Borel observation space.  We collect its hidden score and observation sequence in
\[
        Z_i=(S_i,O_{i,1},O_{i,2},\ldots)
        \in \mathbb R\times\mathsf O_i^{\mathbb N}.
\]
Here \(S_i\) is the arm's fixed unknown score, and \(O_{i,k}\) is the
observation revealed when the arm is played for the \(k\)-th time.
We assume the variables \((Z_i)_{i\in\mathcal I}\) are independent under the prior
\(\Pi_0\).  
Within one arm, this play-indexed observation sequence need not be
independent, identically distributed, or stationary, and it may depend
arbitrarily on \(S_i\).

We assume that the largest score is attained almost surely:
\begin{equation}
        \Pp\!\Big(
        \exists i\in\mathcal I:\;
        S_i=\sup_{j\in\mathcal I}S_j
        \Big)=1.
        \label{eq:max-attained}
\end{equation}
When multiple arms attain the largest score, we break ties by choosing the
smallest index.  Thus the best arm is defined by
\begin{equation}
        I^\star=
        \min\big\{i\in\mathcal I:
        S_i=\sup_{j\in\mathcal I}S_j
        \big\}.
        \label{eq:best-arm}
\end{equation}
The same tie-breaking convention is used for posterior samples.
Thus, when scores tie, playing a maximizing arm other than \(I^\star\) still
counts as a mistake.

\paragraph{Interaction and posterior updating.}

In round \(t\), the learner plays an arm \(A_t\).  We set \(N_i(0)=0\), and for
\(t\ge1\) let
\[
        N_i(t)=\sum_{s=1}^t\1_{\{A_s=i\}}
\]
be the number of times arm \(i\) has been played through round \(t\).  If
\(A_t=i\), the learner observes \(O_{i,N_i(t-1)+1}\) and incurs loss
\(\1_{\{A_t\ne I^\star\}}\).  The scores and observation sequences are sampled
before the interaction begins and are initially hidden from the learner.  The value
\(O_{i,k}\) is revealed only on the \(k\)-th
play of arm \(i\), regardless of the calendar round.  Thus the policy chooses
which hidden observation is revealed next but cannot change the unrevealed
values or their dependence.

A policy is a measurable rule that chooses each action from the past actions
and observations, possibly using private randomness independent of
\((Z_i)_{i\in\mathcal I}\).  Unless
stated otherwise, expectations average over both sources of randomness.

After round \(t\), the unrevealed part of arm \(i\) is
\[
        Z_{i,t}^{\mathrm{rem}}
        =
        (S_i,O_{i,N_i(t)+1},O_{i,N_i(t)+2},\ldots).
\]
We denote by \(\Pi_t\) a regular conditional distribution of
\((Z_{i,t}^{\mathrm{rem}})_{i\in\mathcal I}\) given the realized history.  When
analyzing play after this history, we reindex each residual stream from one and
write \(O_{i,k}\) for \(O_{i,N_i(t)+k}\), so \(O_{i,1}\) is arm \(i\)'s next unread
observation.  Because an observation from one arm reveals
nothing about the others, these residual variables remain independent under
\(\Pi_t\), and only the chosen arm's posterior changes.

We call a posterior \(\Pi\) for the residual variables \emph{admissible} if
they are independent under \(\Pi\) and a best arm exists almost surely.  The initial prior \(\Pi_0\) is admissible by assumption, which implies the same for each future
\(\Pi_t\) by induction.

\paragraph{Thompson sampling and loss.}

At the start of round \(t+1\), Thompson sampling draws a score vector
\(\widetilde S\) from the posterior distribution of
\(S=(S_i)_{i\in\mathcal I}\) and plays
\[
        \widetilde I^\star
        =
        \min\argmax_{i\in\mathcal I}\widetilde S_i.
\]
Equivalently, one may draw
\(\widetilde Z=(\widetilde Z_i)_{i\in\mathcal I}\sim\Pi_t\), independently of
the actual residual variables, and use only the sampled scores to
choose the action.  We denote the resulting policy by \(\pi_{\mathrm{TS}}\).

For an admissible posterior \(\Pi\), we write
\[
        q_i(\Pi)=\Pp_\Pi(I^\star=i).
\]
By admissibility and the deterministic tie-breaking rule,
\(\sum_iq_i(\Pi)=1\).  Thompson sampling plays arm \(i\) next with probability
\(q_i(\Pi)\).  If the next action is fixed to be \(i\), the probability of a
mistake is \(1-q_i(\Pi)\).

For such a posterior \(\Pi\), a policy \(\pi\), and \(m\ge0\), we define
\[
        L_m^\pi(\Pi)
        =
        \E_\Pi^\pi\!\left[
        \sum_{t=1}^m \1_{\{A_t\ne I^\star\}}
        \right],
        \qquad
        L_m^\star(\Pi)=\inf_\pi L_m^\pi(\Pi),
        \qquad
        L_m^{\mathrm{TS}}(\Pi)=L_m^{\pi_{\mathrm{TS}}}(\Pi).
\]
The infimum in \(L_m^\star(\Pi)\) ranges over all \(m\)-round policies.
At the initial prior, we abbreviate
\[
        L_T^\pi=L_T^\pi(\Pi_0),
        \qquad
        L_T^\star=L_T^\star(\Pi_0),
        \qquad
        L_T^{\mathrm{TS}}=L_T^{\mathrm{TS}}(\Pi_0).
\]

\subsection{Main theorem}

Our main result is that Thompson sampling makes at most twice the average number of mistakes as any other policy.
\cite{guha-munagala-2014} observe that the factor of two is sharp even for two-arm stochastic bandits.

\begin{theorem}\label{thm:main}
Let \(\mathcal I\) be finite or countable, and suppose the initial prior
\(\Pi_0\) is admissible.  Then for every finite horizon \(T\),
\[
        L_T^{\mathrm{TS}}\le 2L_T^\star.
\]
\end{theorem}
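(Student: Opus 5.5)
We will prove the stronger statement \(L_m^{\mathrm{TS}}(\Pi)\le 2L_m^\star(\Pi)\) for every admissible \(\Pi\) and every \(m\ge0\), by induction on \(m\). The proof compares the Bellman recursions of the two policies. Write \(\Upd_i\Pi\) for the (random) posterior obtained from \(\Pi\) by playing arm \(i\) once and observing \(O_{i,1}\). Then
\[
L_m^\star(\Pi)=\min_i\Big[(1-q_i(\Pi))+\E\,L_{m-1}^\star(\Upd_i\Pi)\Big],
\qquad
L_m^{\mathrm{TS}}(\Pi)=\sum_i q_i(\Pi)\Big[(1-q_i(\Pi))+\E\,L_{m-1}^{\mathrm{TS}}(\Upd_i\Pi)\Big].
\]
Applying the inductive hypothesis to the second recursion gives
\[
L_m^{\mathrm{TS}}(\Pi)\le \sum_i q_i(1-q_i)+2\sum_i q_i\,\E L_{m-1}^\star(\Upd_i\Pi).
\]
It therefore suffices to prove a one-step estimate: for every arm \(j\),
\[
\sum_i q_i(1-q_i)+2\sum_i q_i\,\E L_{m-1}^\star(\Upd_i\Pi)\;\le\;2(1-q_j)+2\,\E L_{m-1}^\star(\Upd_j\Pi).
\]
Since \(\sum_i q_i(1-q_i)\le 1-q_j^2\le 2(1-q_j)\) trivially, the difficulty lies entirely in the future terms. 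A naive bound such as "playing a different arm now can only hurt the future" is false.

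The stronger one-step estimate I will aim for is
\[
\sum_i q_i\,\E L_{m-1}^\star(\Upd_i\Pi)\le \E L_{m-1}^\star(\Upd_j\Pi)+\tfrac12\sum_{i\ne j}q_i(1-q_i)+\ldots,
\]
or, more cleanly,
\[
\sum_i q_i\,\E L_{m-1}^\star(\Upd_i\Pi)\;\le\;\E L_{m-1}^\star(\Upd_j\Pi)+(1-q_j)-\tfrac12\sum_i q_i(1-q_i).
\]
This suffices because \(\sum_iq_i(1-q_i)\) then cancels exactly.

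To prove the estimate I will use a simulation argument. Fix an optimal \((m-1)\)-round policy \(\pi_j\) for the world after playing \(j\) first. For each \(i\), I build a policy for the world after playing \(i\) first that mimics \(\pi_j\) with extra mistakes bounded in expectation. The first idea is to regard playing \(i\) as "wasting" a round relative to playing \(j\). The policy after \(\Upd_i\) plays \(j\) at its first round, and from then on follows \(\pi_j\) with one round truncated. This costs at most one extra mistake for the round spent on \(j\), but that round is a mistake only when \(j\ne I^\star\). Averaging over \(i\sim q\) gives roughly \(\sum_i q_i(1-q_j)\). This overshoots, so the counterfactual must be sharper.

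Here I will use two independent copies of the world. Draw the Thompson sample \(\widetilde Z\sim\Pi\) independently of the true \(Z\), and swap the unobserved arms between the copies. Concretely, the distribution of (true world, TS choice \(i=\widetilde I^\star\)) is compared with (sampled world, choice \(i=I^\star\)) by exchanging the roles of \(Z_i\) and \(\widetilde Z_i\) for the arm(s) not yet revealed. By independence across arms this preserves the joint law. The symmetry should show that the observation gained from playing the TS arm is, on average, "as good as" knowing the identity of the best arm half the time. This is the source of the factor \(\tfrac12\) and of the cancellation of \(\sum_i q_i(1-q_i)\).

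I expect this swapping step to be the main obstacle. It requires defining the mimicking policy so that its loss in each swapped world is controlled pathwise, and verifying measurability and the exact law-preservation when \(\mathcal I\) is countable. Finally, the base case \(m=0\) is trivial, and the theorem follows by taking \(\Pi=\Pi_0\) and \(m=T\).
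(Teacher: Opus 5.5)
Your reduction matches the paper's: induction over all admissible posteriors, comparison of the two Bellman recursions, and your ``cleaner'' target $\sum_i q_iH_i\le H_j+(1-q_j)-\tfrac12\sum_iq_i(1-q_i)$. Here $H_i$ is the optimal $(m-1)$-round loss after a free observation of arm $i$, and this target is exactly the paper's one-step estimate $C(q)+\sum_iq_iH_i\le 1-q_a+H_a$. That estimate is the whole content of the theorem, and you do not prove it. The only mimicking policy you define (play $j$, then run $\pi_j$ with its last round cut off) ignores $i$ and always deletes $\pi_j$'s final action $a_{m-1}$. It therefore certifies only $C(q)\le q_j(1-q_j)+(1-q_j)\Pp(a_{m-1}\ne I^\star)$. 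Take $K$ equally likely arms whose first observation reveals the score, and a long horizon. Then the right side is about $1/K$, while $C(q)=\tfrac12(1-1/K)$. So no symmetrization applied to these policies can recover the missing term. Your sentence ``the symmetry should show \ldots\ as good as knowing the identity of the best arm half the time'' stands in for the argument rather than supplying it.

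The first missing ingredient is an $i$-dependent simulation that uses arm $i$'s free observation. For $i\ne j$, $F^{(i)}$ stores $O_{i,1}$ and plays $j$ to supply $\pi_j$'s free observation. The first time the simulated $\pi_j$ requests arm $i$, $F^{(i)}$ feeds in the stored value without playing. Its plays are then $(j,a_1,\ldots,a_{m-1})$ with one occurrence of $J_i$ deleted, where $J_i=i$ if $i\in\mathcal R=\{j,a_1,\ldots,a_{m-1}\}$ and $J_i=a_{m-1}$ otherwise. This gives the pathwise identity $\ell(F^{(i)})-\ell(\pi_j)=\1_{\{I^\star\ne j\}}-\1_{\{I^\star\ne J_i\}}$. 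It reduces everything to $\Pp(J_{I_Y^\star}(X)\ne I_X^\star)\ge\tfrac12\Pp(I_X^\star\ne I_Y^\star)$ for independent $X,Y\sim\Pi$.

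The second missing ingredient is that proving this needs two distinct symmetries, which your description runs together. The full exchange $X\leftrightarrow Y$ (your comparison of the true and sampled worlds) only shows that the event $E_{Y>X}$, where $Y$'s best (score, index) pair beats $X$'s, has probability $\tfrac12\Pp(I_X^\star\ne I_Y^\star)$. One then needs the partial swap $X_k\leftrightarrow Y_k$ for $k\notin\mathcal R$. Here $\mathcal R$ is determined by the comparison policy's trajectory, not by what is ``not yet revealed'' at the Thompson step. This swap is a law-preserving involution because it leaves the comparison policy's transcript unchanged.

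Set $\mathcal A=\{J_{I_Y^\star}(X)\ne I_X^\star\}$. On $E_{Y>X}\setminus\mathcal A$ one must have $I_Y^\star\notin\mathcal R$. After the swap, the overall winner is therefore $I_{X'}^\star=I_Y^\star\notin\mathcal R$, while every $J_i(X')\in\mathcal R$. Hence the swap maps $E_{Y>X}\setminus\mathcal A$ into $\mathcal A\setminus E_{Y>X}$, which yields the bound.

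A smaller point: compare against an arbitrary policy and take the infimum at the end, since an optimal $\pi_j$ need not exist when $\mathcal I$ is countable.
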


\subsection{Examples}

The standard Bayesian stochastic bandit is recovered by setting \(S_i=\mu_i\)
and letting \(O_{i,k}\) be the \(k\)-th reward from arm \(i\)
\cite{guha-munagala-2014}.  Several other examples are described below.

\paragraph{Variance and covariance.}
Suppose arm \(i\) has an unknown covariance matrix \(\Sigma_i\) and returns independent
samples from (say) \(\mathcal N(0,\Sigma_i)\).  Taking \(S_i=\lVert\Sigma_i\rVert\) or
\(S_i=-\lVert\Sigma_i\rVert\) for any fixed matrix norm, the ``best'' arm is the most or least volatile one, in a precise sense depending on the norm.

\paragraph{Unknown transition times.}
Letting \(\tau_i\) be a random nonnegative integer for each $i$, suppose arm \(i\) returns zero on its
first \(\tau_i\) plays and one thereafter.  Taking \(S_i=-\tau_i\) makes the arm
with the earliest transition best, while \(S_i=\tau_i\) makes the arm with the
latest transition best.

\paragraph{Sampling without replacement.}
Suppose arm \(i\) contains \(m\) hidden bits \(X_{i,1},\ldots,X_{i,m}\), and each play
reveals one uniformly chosen unrevealed bit. After all \(m\) bits have been seen,
further plays reveal nothing new.  With
\(S_i=m^{-1}\sum_{k=1}^m X_{i,k}\), the best arm is the one with the largest
fraction of ones.

\paragraph{Rested Markov rewards.}
Suppose arm \(i\)'s observations follow a finite-state irreducible Markov reward process
that advances only when the arm is played. We can then take \(S_i\) to be its stationary mean
reward; such models are studied in \cite{anantharam-varaiya-walrand-1987-markov,tekin-liu-2012}.
By contrast, \cite{guha-munagala-shi-2010} studies restless models, in which an arm continues
to evolve while unplayed.

\subsection{Random horizons and round weights}

By a simple averaging argument, the factor-two guarantee also extends to a random horizon drawn independently of
the unknown arms and not revealed to the learner in advance.  Equivalently,
it extends to mistake loss with any nonincreasing sequence of deterministic
round weights.  Geometric discounting is the special case
\(w_t=\gamma^{t-1}\), with \(0<\gamma<1\).

\begin{corollary}\label{cor:random-horizon}
Let \(N\) be an \(\mathbb N\cup\{0,\infty\}\)-valued random horizon, independent
of the unknown arms and policy randomization.  Policies may use the distribution of
\(N\) and the current round, but not the realized value of \(N\).  Then
\begin{equation}
        \E^{\mathrm{TS}}\!\left[
        \sum_{t=1}^N\1_{\{A_t\ne I^\star\}}
        \right]
        \le
        2\inf_\pi
        \E^\pi\!\left[
        \sum_{t=1}^N\1_{\{A_t\ne I^\star\}}
        \right].
        \label{eq:random-horizon-bound}
\end{equation}
Equivalently, for any deterministic sequence
\(w_1\ge w_2\ge\cdots\ge0\),
\begin{equation}
        \E^{\mathrm{TS}}\!\left[
        \sum_{t\ge1} w_t\1_{\{A_t\ne I^\star\}}
        \right]
        \le
        2\inf_\pi
        \E^\pi\!\left[
        \sum_{t\ge1} w_t\1_{\{A_t\ne I^\star\}}
        \right].
        \label{eq:weighted-bound}
\end{equation}
\end{corollary}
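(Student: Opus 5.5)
The plan is to reduce Corollary~\ref{cor:random-horizon} to Theorem~\ref{thm:main} by writing the random-horizon loss as a mixture of fixed-horizon losses, and to compare Thompson sampling with every policy one horizon at a time. The key fact is that Thompson sampling is \emph{horizon-free}: its action in round \(t\) depends only on the posterior \(\Pi_{t-1}\), not on \(T\) or on the law of \(N\). Consequently its trajectory is the same for every horizon. If we let \(p_T=\Pp(N\ge T)\) for \(T\ge1\), then \(p_T\) is nonincreasing, and independence of \(N\) from the arms and the policy randomness gives, for any policy \(\pi\),
\[
\E^\pi\Big[\sum_{t=1}^N\1_{\{A_t\ne I^\star\}}\Big]=\sum_{t\ge1}p_t\,\Pp^\pi(A_t\ne I^\star).
\]
Here we use that a policy cannot see \(N\), so its action \(A_t\) is independent of \(\{N\ge t\}\). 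This shows the two statements \eqref{eq:random-horizon-bound} and \eqref{eq:weighted-bound} are equivalent. Any nonincreasing \(w\ge0\) equals \(c\,p\) for some law of \(N\) (with \(c=w_1\); if \(w_1=0\) there is nothing to prove). Even without that normalization, the argument below works directly with \(w\).

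Next, summation by parts. Setting \(w_\infty=\lim w_t\ge0\) and \(\ell_t^\pi=\Pp^\pi(A_t\ne I^\star)\), we have
\[
\sum_t w_t\ell_t^\pi=\sum_{T\ge1}(w_T-w_{T+1})\sum_{t\le T}\ell^\pi_t+w_\infty\sum_{t\ge1}\ell_t^\pi
=\sum_{T\ge1}(w_T-w_{T+1})L_T^\pi+w_\infty L_\infty^\pi .
\]
This is justified by Tonelli, since all terms are nonnegative. For Thompson sampling, \(L_T^{\pi_{\mathrm{TS}}}=L_T^{\mathrm{TS}}\le 2L_T^\star\le 2L_T^\pi\) for each finite \(T\) by Theorem~\ref{thm:main}; the first equality uses horizon-freeness. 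Multiplying by the nonnegative coefficients \(w_T-w_{T+1}\) and summing gives the bound for the finite-horizon part. Because the coefficients are nonnegative, only the monotonicity of \(w\) is used, and this holds for every \(\pi\) at once. Taking the infimum over \(\pi\) gives the claim.

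The main obstacle is the tail term \(w_\infty L_\infty^\pi\) (the event \(N=\infty\)). If \(w_\infty>0\), we need \(L_\infty^{\mathrm{TS}}\le 2L_\infty^\pi\). This follows from monotone convergence: \(L_\infty^{\mathrm{TS}}=\lim_T L_T^{\mathrm{TS}}\le 2\liminf_T L_T^\pi=2L_\infty^\pi\), which is valid even when the right side is infinite. A second subtlety is that the comparison policy \(\pi\) may use the round index and the law of \(N\). This is harmless, because Theorem~\ref{thm:main} compares against \emph{all} \(T\)-round policies, and restricting \(\pi\) to its first \(T\) rounds gives such a policy with loss \(L_T^\pi\). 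Finally, I would note that the argument applies to every \(\pi\) individually rather than to \(L_T^\star\) across different horizons. This avoids any need for a single policy that is simultaneously optimal for all \(T\), which in general does not exist.
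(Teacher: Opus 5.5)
Your proposal is correct and is essentially the paper's argument. Both write the random-horizon (equivalently, weighted) loss as a mixture of fixed-horizon losses, apply Theorem~\ref{thm:main} at each horizon to the restriction of a single fixed policy $\pi$, and use monotone convergence for the event $N=\infty$; the paper averages over $k=N\wedge m$ and lets $m\to\infty$, whereas you use summation by parts with coefficients $w_T-w_{T+1}$ and an explicit tail term $w_\infty L_\infty^\pi$.
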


\begin{proof}
We fix a policy \(\pi\) for the random-horizon problem and an integer
\(m\ge1\).  For each \(k\in\{0,\ldots,m\}\), Theorem~\ref{thm:main} and the
definition of \(L_k^\star\), applied to the restriction of \(\pi\) to its first
\(k\) decisions, give
\begin{equation}
        \E^{\mathrm{TS}}\!\left[
        \sum_{t=1}^k\1_{\{A_t\ne I^\star\}}
        \right]
        \le
        2\E^\pi\!\left[
        \sum_{t=1}^k\1_{\{A_t\ne I^\star\}}
        \right].
        \label{eq:fixed-truncation-bound}
\end{equation}
We average \eqref{eq:fixed-truncation-bound} over \(k=N\wedge m\), and then let
\(m\to\infty\).  Independence of \(N\) and monotone convergence yield
\eqref{eq:random-horizon-bound} after minimizing over \(\pi\).
Finally, if \(w_1>0\), \eqref{eq:weighted-bound} follows by applying
\eqref{eq:random-horizon-bound} to an independent hidden horizon \(N\) with law
\(\Pp(N\ge t)=w_t/w_1\).  The case \(w_1=0\) is immediate.
\end{proof}

\section{Proof of Theorem~\ref{thm:main}}\label{sec:proof-main}

We prove Theorem~\ref{thm:main} by induction on the horizon \(n\), using the
one-step estimate \eqref{eq:key-estimate} in
Proposition~\ref{prop:one-step}.  The induction is carried out uniformly over
every admissible posterior \(\Pi\):
\[
        L_n^{\mathrm{TS}}(\Pi)\stackrel{?}{\le} 2L_n^\star(\Pi).
\]
To prove \eqref{eq:key-estimate}, we couple the \(n\)-round problems following
free observations from different arms and bound the differences between their
\(L^\star\)-values.  We average these bounds over the posterior by swapping the
unobserved arms of two independent draws.

\subsection{First-step recursions and the one-step bound}

We fix \(n\ge0\) and an admissible posterior \(\Pi\).
Supposing that arm \(i\)'s next observation is revealed and consumed before the next \(n\) rounds begin, we write
\((\Upd_iL_n^\star)(\Pi)\) for the optimal expected number of mistakes over the
next \(n\) rounds, when the policy may use this free observation.  We define
\((\Upd_i L_n^{\mathrm{TS}})(\Pi)\) analogously for the expected number of mistakes made by Thompson sampling, after seeing this free observation.
(Thus in defining $\Upd_i$, the free observation counts as neither a round nor a mistake.)

With \(n+1\) rounds remaining, fixing the first action to be \(a\) gives
first-round mistake probability \(1-q_a(\Pi)\).
After \(O_{a,1}\) is revealed and consumed, the optimal expected number of
mistakes over the remaining \(n\) rounds is \((\Upd_a L_n^\star)(\Pi)\).
Hence
\begin{equation}
        L_{n+1}^\star(\Pi)
        =
        \inf_{a\in\mathcal I}
        \{1-q_a(\Pi)+(\Upd_aL_n^\star)(\Pi)\}.
        \label{eq:optimal-risk-recursion}
\end{equation}

Under Thompson sampling, the first action has distribution
\((q_i(\Pi))_{i\in\mathcal I}\), and the first-round mistake
probability is easily seen to be
\[
        \Pp_\Pi(\widetilde I^\star\ne I^\star)
        =
        1-\sum_{i\in\mathcal I} q_i(\Pi)^2
\]
where \(\widetilde I^\star\) is the best-arm index of an independent draw from \(\Pi\).
Averaging over the sampled index then gives
\begin{equation}
        L_{n+1}^{\mathrm{TS}}(\Pi)
        =
        1-\sum_{i\in\mathcal I} q_i(\Pi)^2
        +
        \sum_{i\in\mathcal I}
        q_i(\Pi)(\Upd_i L_n^{\mathrm{TS}})(\Pi).
        \label{eq:TS-risk-recursion}
\end{equation}
Combining \eqref{eq:optimal-risk-recursion} and
\eqref{eq:TS-risk-recursion} reduces Theorem~\ref{thm:main} to the following
one-step lower bound on \(L_{n+1}^\star(\Pi)\):

\begin{proposition}\label{prop:one-step}
For every admissible posterior \(\Pi\) and every \(n\ge0\),
\begin{equation}
        L_{n+1}^\star(\Pi)
        \ge
        \frac{1-\sum_{i\in\mathcal I}q_i(\Pi)^2}{2}
        +
        \sum_{i\in\mathcal I}q_i(\Pi)(\Upd_i L_n^\star)(\Pi).
        \label{eq:key-estimate}
\end{equation}
\end{proposition}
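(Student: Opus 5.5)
By \eqref{eq:optimal-risk-recursion}, it suffices to show that for every fixed first action \(a\in\mathcal I\),
\[
        1-q_a+(\Upd_aL_n^\star)(\Pi)
        \ge
        \frac{1-\sum_i q_i^2}{2}+\sum_i q_i(\Upd_iL_n^\star)(\Pi),
\]
where \(q_i=q_i(\Pi)\). Since \(\sum_i q_i=1\), the inequality above can be rewritten as
\[
        \sum_{i\in\mathcal I} q_i\Big[(\Upd_iL_n^\star)(\Pi)-(\Upd_aL_n^\star)(\Pi)\Big]
        \le
        \sum_{i\in\mathcal I} q_i\Big[\tfrac{1+q_i}{2}-q_a\Big]
        = \frac{1+\sum_i q_i^2}{2}-q_a .
\]
The term \(i=a\) vanishes on the left, and its right-hand weight \(q_a(1-q_a)/2\) is nonnegative. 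So the task is to control the cost of receiving the free observation from the "wrong'' arm \(a\) instead of arm \(i\). I will do this in aggregate rather than pairwise, because a pointwise bound of the form \((1+q_i)/2-q_a\) can fail for an individual \(i\).

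\textbf{Simulation step.} I would bound \((\Upd_aL_n^\star)(\Pi)\) from below via a counterfactual simulation. Fix an optimal (or \(\varepsilon\)-optimal) policy \(\rho_a\) for the problem after a free observation of arm \(a\). I then build a policy for the problem after a free observation of arm \(i\) as follows. It draws an independent copy \(\widetilde Z\sim\Pi\) and feeds \(\rho_a\) the fake observation \(\widetilde O_{a,1}\). Whenever \(\rho_a\) plays arm \(a\), it uses the fake stream \(\widetilde Z_a\); on the other arms it uses the real streams. It discards the real \(O_{i,1}\), or uses it consistently.

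In this simulation the "world'' that \(\rho_a\) sees is the swapped configuration: arm \(a\) comes from \(\widetilde Z\) and all other arms from \(Z\). By independence across arms, this configuration again has law \(\Pi\). Hence \(\rho_a\)'s actions have the same law as under its own problem, and mistakes are judged against the swapped best arm \(I^\star(\widetilde Z_a,Z_{-a})\). Mistakes against the true \(I^\star(Z)\) can differ only on the event that the true best arm and the swapped best arm differ.

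\textbf{Sharper coupling.} That crude bound is too weak, since the swap event has probability of order \(q_a\) plus the probability that \(\widetilde Z_a\) would win. To reach the factor \(\tfrac12\), I would symmetrize over the two copies \(Z,\widetilde Z\), swapping both arm \(a\) and arm \(i\) between them. I would then average over \(i\) with weights \(q_i\), which should be realized as "\(i=\widetilde I^\star\)'', the best arm of the independent copy. The hope is that the total loss discrepancy is at most \(\Pp(I^\star\ne a)\) minus a half-weight on the event \(\{\widetilde I^\star=I^\star\}\), which has probability \(\sum_i q_i^2\). Concretely, I would track the four events
\[
        \{I^\star=a\},\qquad \{\widetilde I^\star=a\},\qquad \{I^\star=\widetilde I^\star\},
\]
and the event that the swapped configuration's best arm is \(a\), \(i\), or other. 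I would then show by exchangeability of \((Z,\widetilde Z)\) that the discrepancy indicators average to \(\frac{1+\sum_i q_i^2}{2}-q_a\).

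\textbf{Main obstacle.} The expected obstacle is exactly this bookkeeping. One must define the swap so that the simulated policy's information is legitimate: the policy for the free-\(i\) problem must only use \(O_{i,1}\) and its own randomness. One must also ensure that the event where the swapped arm's best-arm status changes is counted only once and with weight \(\tfrac12\) through symmetry. I would first try the choice "simulate with arm \(i\) taken from \(\widetilde Z\) and arm \(a\) from \(Z\)''. I would then check that, after averaging over \(i\sim q\) and over the exchange \(Z\leftrightarrow\widetilde Z\), the mistake discrepancies telescope into the identity above.
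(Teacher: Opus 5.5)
Your reduction to a fixed first action $a$, and its rewriting as $\sum_i q_i(H_i-H_a)\le\frac{1+\sum_iq_i^2}{2}-q_a$ with $H_i=(\Upd_iL_n^\star)(\Pi)$, are correct and match the paper. So is your instinct to realize the $q$-average as the best arm of an independent copy and to get the $\tfrac12$ from exchangeability. But the step that carries the proof is missing, and the construction you do specify cannot supply it, for two reasons.

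First, your free-$i$ policy runs $\rho_a$ on the hybrid world $(\widetilde Z_a,Z_{-a})$, so its actions follow the fake stream while its mistakes are scored against $I^\star(Z)$. When the two best arms differ, the counts can differ in every one of the $n$ rounds. For example, $\rho_a$ reads a fake stream saying $a$ is best and plays $a$ throughout while $a$ is actually bad. The error is then of order $n$ times that event's probability, not of order one, and symmetrizing the analysis over $Z\leftrightarrow\widetilde Z$ (or also swapping arm $i$) does not change this.

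Second, as specified your policy never exploits $O_{i,1}$, and that alone is fatal. Take $S_a\in\{10,0\}$ with probabilities $1-\delta,\delta$, and $m\ge2$ further arms with i.i.d.\ uniform scores on $(1,2)$. Let all streams be constant ($O_{j,k}=S_j$), with $\delta$ small and $n$ large. A free-$i$ policy ignoring $O_{i,1}$ then has only the information of an ordinary $n$-round policy, so it certifies at best $H_i\le L_n^\star(\Pi)=\delta(1+G_{n-1})$, where $G_k$ is the optimal $k$-round loss among the $m$ arms. Meanwhile $H_a=\delta G_n$ and $G_n-G_{n-1}\to0$. So you can only get $\sum_iq_i(H_i-H_a)\le\delta^2(1+G_{n-1}-G_n)\to\delta^2$, which exceeds the required $\frac{1+\sum_iq_i^2}{2}-q_a=\frac{\delta^2(1+1/m)}{2}$.

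Your stated target is also off. What is needed is $\Pp(I^\star\ne a)-\tfrac12\Pp(\widetilde I^\star\ne I^\star)$, not $\Pp(I^\star\ne a)$ minus half of $\Pp(\widetilde I^\star=I^\star)$.

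The missing idea is a coupling in which the simulated policy sees the real data and the loss changes in a single round. Given any free-$a$ policy $F$ with actions $a_1,\dots,a_n$, the free-$i$ policy does the following: it stores $O_{i,1}$, spends its first round actually playing $a$ to obtain the true $O_{a,1}$, and then follows $F$. The first time $F$ calls arm $i$, it feeds $F$ the stored observation without playing. Its plays are $a,a_1,\dots,a_n$ with one occurrence of $J_i$ removed, where $J_i=i$ if $i\in\mathcal R=\{a,a_1,\dots,a_n\}$ and $J_i=a_n$ otherwise. Hence its mistake count exceeds $F$'s by exactly $\1_{\{I^\star\ne a\}}-\1_{\{I^\star\ne J_i\}}$.

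Averaging with $i=I_Y^\star$ for an independent $Y\sim\Pi$ reduces the proposition to $\Pp(J_{I_Y^\star}(X)\ne I_X^\star)\ge\tfrac12\Pp(I_X^\star\ne I_Y^\star)$. Exchangeability enters here, but as an analysis device, never inside a policy. The map swapping $X_j\leftrightarrow Y_j$ for all $j\notin\mathcal R$ (arms $F$ never reads) is a law-preserving involution, and it leaves $F$'s transcript, hence $\mathcal R$ and $J$, unchanged. Consider outcomes where $Y$'s best arm beats $X$'s but $J_{I_Y^\star}(X)=I_X^\star$; this forces $I_Y^\star\notin\mathcal R$. The swap sends them to outcomes where $Y'$'s best arm no longer beats $X'$'s, and where $X'$'s best arm lies outside $\mathcal R$ and so differs from $J_{I_{Y'}^\star}(X')\in\mathcal R$. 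That is where the factor $\tfrac12$ comes from.
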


\begin{proof}[Proof of Theorem~\ref{thm:main} assuming Proposition~\ref{prop:one-step}]
We define
\[
        \Delta_n(\Pi)=2L_n^\star(\Pi)-L_n^{\mathrm{TS}}(\Pi).
\]
We use the same update notation
\[
        (\Upd_i\Delta_n)(\Pi)
        =2(\Upd_iL_n^\star)(\Pi)-(\Upd_iL_n^{\mathrm{TS}})(\Pi)
\]
and prove by induction on \(n\) that
\(\Delta_n(\Pi)\geq 0\) for every admissible
posterior \(\Pi\).  Since
\(\Delta_0(\Pi)=0\), the base case holds.  For the inductive step, we assume that
\(\Delta_n(\Pi')\ge0\) for every admissible posterior \(\Pi'\), and we fix an
admissible \(\Pi\).  If \(O_{i,1}=o\), let \(\Pi^{i,o}\) be the posterior after
this observation is consumed.  Conditioning on the free observation gives
\[
        (\Upd_iL_n^{\mathrm{TS}})(\Pi)
        =\E_\Pi L_n^{\mathrm{TS}}(\Pi^{i,O_{i,1}}),
        \qquad
        (\Upd_iL_n^\star)(\Pi)
        \ge\E_\Pi L_n^\star(\Pi^{i,O_{i,1}}).
\]
Indeed, after conditioning on \(O_{i,1}=o\), any policy using the free
observation is an \(n\)-round policy under \(\Pi^{i,o}\), and hence has expected
loss at least \(L_n^\star(\Pi^{i,o})\).  The updated posterior is admissible
almost surely, so the induction hypothesis gives
\begin{equation}
\begin{aligned}
        (\Upd_i\Delta_n)(\Pi)
        &=2(\Upd_iL_n^\star)(\Pi)
          -(\Upd_iL_n^{\mathrm{TS}})(\Pi)\\
        &\ge
        \E_\Pi\!\left[\Delta_n(\Pi^{i,O_{i,1}})\right]
        \ge0.
\end{aligned}
        \label{eq:updated-delta-nonnegative}
\end{equation}
Subtracting \eqref{eq:TS-risk-recursion} from twice
\eqref{eq:key-estimate} cancels the first-round terms and gives
\begin{equation}
\begin{aligned}
        \Delta_{n+1}(\Pi)
        &=
        2L_{n+1}^\star(\Pi)-L_{n+1}^{\mathrm{TS}}(\Pi) \\
        &\ge
        \sum_{i\in\mathcal I}
        q_i(\Pi)
        (\Upd_i\Delta_n)(\Pi).
\end{aligned}
        \label{eq:delta-recursion}
\end{equation}
Every term on the right-hand side of \eqref{eq:delta-recursion} is
nonnegative by \eqref{eq:updated-delta-nonnegative}.  This completes the
inductive step and hence the proof.
\end{proof}

\subsection{Proof of Proposition~\ref{prop:one-step}}\label{sec:key-proof}

It remains to prove Proposition~\ref{prop:one-step}.  We compare the problems obtained
from free observations of different single arms.  We fix \(n\ge0\) and an
admissible posterior \(\Pi\), and define
\[
        q_i=q_i(\Pi),
        \qquad
        H_i=(\Upd_iL_n^\star)(\Pi),
        \qquad
        C(q)=\frac{1-\sum_{i\in\mathcal I}q_i^2}{2}.
\]
Here \(H_i\) is the optimal expected number of mistakes over the next
\(n\) rounds when \(O_{i,1}\) is revealed and consumed before the first decision,
and \(C(q)\) is half the probability that two independent draws from \(\Pi\) have
different best-arm indices.  With this notation, \eqref{eq:key-estimate} becomes
\begin{equation}
        C(q)+\sum_{i\in\mathcal I} q_iH_i
        \stackrel{?}{\le} L_{n+1}^\star(\Pi).
        \label{eq:H-bound}
\end{equation}
Recalling \eqref{eq:optimal-risk-recursion}, it suffices to show that for every \(a\in\mathcal I\):
\begin{equation}
        C(q)+\sum_{i\in\mathcal I} q_i H_i
        \stackrel{?}{\le}
        1-q_a+H_a.
        \label{eq:key-pull}
\end{equation}
Here \(1-q_a+H_a\) is the optimal expected number of mistakes when the first
action is constrained to be \(a\).
When \(n=0\), every \(H_i\) is zero, while
\[
        1-q_a-C(q)
        =\frac12(1-q_a)^2
        +\frac12\sum_{i\ne a}q_i^2
        \ge0.
\]
Thus \eqref{eq:key-pull} holds for every \(a\) when \(n=0\), proving
Proposition~\ref{prop:one-step} in this case.  We now assume \(n\ge1\), fix an
arm \(a\), and prove \eqref{eq:key-pull}.

\paragraph{Coupling after different free observations.}

We fix any \(n\)-round policy \(F\) that receives \(O_{a,1}\) for free and
construct, for each \(i\), a policy
\(F^{(i)}\) that receives \(O_{i,1}\) for free.  We may assume that \(F\) is
deterministic since for a randomized \(F\), one can fix its private seed throughout the construction and average over the seed afterward.

We couple \(F\) and the policies \(F^{(i)}\) using one draw
\(X=(X_j)_{j\in\mathcal I}\sim\Pi\), where
\(X_j=(S_j,O_{j,1},O_{j,2},\ldots)\), and write \(a_1,\ldots,a_n\) for the
actions of \(F\) on \(X\).  We set \(F^{(a)}=F\).  For \(i\ne a\), \(F^{(i)}\)
stores its free observation \(O_{i,1}\), plays arm \(a\), and uses the resulting
\(O_{a,1}\) to initialize a simulation of \(F\).  Then \(F^{(i)}\) follows the
simulation until \(F^{(i)}\) has made \(n\) plays.  If the simulation first selects
arm \(i\) before \(F^{(i)}\) stops, \(F^{(i)}\) adds the stored observation to
the simulated history without making a play.

The initial play of \(a\) and the stored observation from \(i\) align the
arm-local observation counts, so \(F^{(i)}\) is a valid policy.  Set \(a_0=a\)
and \(\mathcal R=\{a_0,a_1,\ldots,a_n\}\).  It is easy to check that the \(n\)
plays of \(F^{(i)}\) are obtained from \(a_0,a_1,\ldots,a_n\) by deleting one
occurrence of the arm
\begin{equation}
        J_i=
        \begin{cases}
        i, & i\in\mathcal R,\\
        a_n, & i\notin\mathcal R.
        \end{cases}
        \label{eq:J-def}
\end{equation}
Since \(a_n\in\mathcal R\), both cases in \eqref{eq:J-def} give
\begin{equation}
        J_i\in\mathcal R
        \qquad\text{for every }i\in\mathcal I.
        \label{eq:J-in-R}
\end{equation}
If \(i\ne a\) and the stored observation is used, the deleted arm is \(i\);
otherwise \(F^{(i)}\) stops before \(a_n\), so the deleted arm is \(a_n\).
When \(i=a_n\), both cases give \(J_i=a_n\).  For \(i=a\), deleting the
inserted action \(a_0=a\) gives \(J_a=a\).

We denote the best-arm index of \(X\) by \(I_X^\star\), and write \(\ell(F)\)
and \(\ell(F^{(i)})\) for their respective mistake counts under this coupling;
the free observation before the \(n\) counted plays incurs no mistake.
Inserting a play of \(a\) and omitting a play of \(J_i\) changes the count by exactly
\begin{equation}
        \ell(F^{(i)})-\ell(F)
        =
        \1_{\{I_X^\star\ne a\}}-\1_{\{I_X^\star\ne J_i\}}.
        \label{eq:loss-diff}
\end{equation}
By the definition of \(H_i\), we have \(H_i\le\E\ell(F^{(i)})\).  Taking expectations
in \eqref{eq:loss-diff} gives
\begin{equation}
        H_i-\E\ell(F)
        \le
        \E\!\left[
        \1_{\{I_X^\star\ne a\}}-\1_{\{I_X^\star\ne J_i\}}
        \right].
        \label{eq:Hi-F-comparison}
\end{equation}

So far, our analysis of $F^{(i)}$ has not used the fact that we are studying Thompson sampling, but this will now become relevant.
Draw an independent posterior sample
\(Y=(Y_j)_{j\in\mathcal I}\sim\Pi\) and denote its best-arm index by
\(I_Y^\star\), so that \(\Pp(I_Y^\star=i)=q_i\).  
Writing \(J_i=J_i(X)\) to make its dependence on \(X\) explicit, the \(q_i\)-weighted average of
\eqref{eq:Hi-F-comparison} gives
\begin{equation}
        \sum_{i\in\mathcal I}q_iH_i-\E\ell(F)
        \le
        1-q_a-
        \Pp\!\left(J_{I_Y^\star}(X)\ne I_X^\star\right).
        \label{eq:averaged-comparison}
\end{equation}

\paragraph{Swapping unobserved arms.}

The next lemma shows the remaining estimate.  Once \eqref{eq:crossing} is
proved, substituting it into \eqref{eq:averaged-comparison} and taking the
infimum over \(F\) gives \eqref{eq:key-pull}, and hence
Theorem~\ref{thm:main}.  We prove this lemma by swapping the unobserved
arms between \(X\) and \(Y\).

\begin{lemma}
For independent \(X,Y\sim\Pi\), with \(J_i(X)\) defined by
\eqref{eq:J-def},
\begin{equation}
        \Pp(J_{I_Y^\star}(X)\ne I_X^\star)
        \ge
        \frac12\Pp(I_Y^\star\ne I_X^\star)
        =C(q).
        \label{eq:crossing}
\end{equation}
\end{lemma}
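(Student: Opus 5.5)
The plan is to reduce the lemma to comparing a ``bad'' event with a ``good'' event, and then to transfer probability from one to the other with a measure-preserving swap of the unobserved arms of \(X\) and \(Y\).

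\emph{Reduction.} Write \(\mathcal R=\mathcal R(X)\) and
\[
A=\{J_{I_Y^\star}(X)\ne I_X^\star\}.
\]
By \eqref{eq:J-def}, on \(\{I_Y^\star\in\mathcal R\}\) we have \(J_{I_Y^\star}=I_Y^\star\). So on \(\{I_Y^\star\ne I_X^\star\}\) the event \(A\) can fail only when \(I_Y^\star\notin\mathcal R\) and \(a_n=I_X^\star\). In particular it can fail only on
\[
B=\{I_Y^\star\notin\mathcal R,\ I_X^\star\in\mathcal R,\ I_Y^\star\ne I_X^\star\}.
\]
This gives \(\Pp(I_Y^\star\ne I_X^\star)\le \Pp(A)+\Pp(B)\). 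The lemma therefore follows from
\[
\Pp(B)\le\Pp(A).
\]
We also record a second source of \(A\): if \(I_X^\star\notin\mathcal R\), then \(A\) holds automatically, because \(J_i\in\mathcal R\) for every \(i\) by \eqref{eq:J-in-R}.

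\emph{The swap.} Since \(F\) is deterministic, the set \(\mathcal R(X)\) is determined by the coordinates \((X_j)_{j\in\mathcal R(X)}\). It is a stopping set: the arm inspected next depends only on arms already inspected. Define
\[
X'=(X_{\mathcal R},Y_{\mathcal R^c}),\qquad Y'=(Y_{\mathcal R},X_{\mathcal R^c}).
\]
By the stopping-set property, \(\mathcal R(X')=\mathcal R(X)\). The map \((X,Y)\mapsto(X',Y')\) is an involution. It preserves \(\Pi\otimes\Pi\): condition on the revealed block \(X_{\mathcal R}\), and use independence across arms and the fact that the unrevealed coordinates of \(X\) and \(Y\) are i.i.d.\ with the same product law.

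\emph{Case analysis on \(B\).} Let \(x_1,x_2\) be the top scores of \(X\) on \(\mathcal R,\mathcal R^c\), and \(y_1,y_2\) those of \(Y\). On \(B\) we have \(x_1\ge x_2\) and \(y_2\ge y_1\), with ties resolved by the smallest-index rule.
\begin{itemize}
\item If \(y_2\) beats \(x_1\), then the best arm of \(X'\) lies in \(\mathcal R^c\). So \((X',Y')\in A\) by the remark after the reduction.
\item If \(x_1\) beats \(y_2\), then the best arm of \(X'\) is still \(I_X^\star=a_n\). Here I would try to show that the best arm of \(Y'\) lies in \(\mathcal R\) and differs from \(a_n\), which again places \((X',Y')\) in \(A\). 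Failing that, I would compose the swap with the symmetric exchange \(X\leftrightarrow Y\) on the \(\mathcal R^c\) block, looking for a configuration where \(x_2\) or \(y_2\) becomes the top score of the new first copy.
\end{itemize}

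\emph{Main obstacle.} The second case, and the need to make the images of the subcases of \(B\) disjoint so the total map is injective, is the step I expect to be hardest. The first case is clean. The second case needs a careful four-way comparison of \(x_1,x_2,y_1,y_2\), together with possibly a second involution or an averaging of the two candidate maps.

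Once \(\Pp(B)\le\Pp(A)\) is in hand, the reduction gives \(2\Pp(A)\ge\Pp(I_Y^\star\ne I_X^\star)=2C(q)\). Substituting this into \eqref{eq:averaged-comparison} and taking the infimum over \(F\) then yields \eqref{eq:key-pull}.
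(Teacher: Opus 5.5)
Your reduction, the swap $\Phi$, and your first case are all sound. The argument stops exactly at the real difficulty, though, and the second case cannot be closed along the lines you suggest.

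On $B$ with $x_1$ beating $y_2$, you have $x_1\succ x_2$ and $x_1\succ y_2\succ y_1$. So $X$'s best datum in $\mathcal R$ is the largest of all arm data in both copies. You need to keep $X_{\mathcal R}$ in the first copy so that $\mathcal R$, $a_n$ and $J$ stay fixed. Any such rearrangement therefore leaves the first copy's best arm equal to $\arg x_1$, which on $B\setminus A$ is $a_n$. Under $\Phi$, landing in $A$ would then require $I_{Y'}^\star\in\mathcal R\setminus\{a_n\}$, and this fails whenever $x_2\succ y_1$ or $\arg y_1=a_n$.

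A two-arm example shows the obstruction clearly. If $F$ always plays $a$, then $\mathcal R=\{a\}$ and $J_a=J_b=a$, so $A=\{I_X^\star=b\}$. In your second case, $X_a$ beats every other datum. Every rearrangement keeping $X_a$ in the first copy therefore has first-copy best arm $a$, i.e.\ lies outside $A$. Composing with an exchange on the $\mathcal R^c$ block just gives $\Phi$ again. The full exchange $X\leftrightarrow Y$ does not preserve $\mathcal R$ in general, so it cannot be used pointwise.

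The missing idea is to use the exchangeability of $(X,Y)$ at the level of probabilities, so that you never have to pay for this half of $B$.

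Let $E_{Y>X}$ be the event that $I_X^\star\ne I_Y^\star$ and $Y$'s best index--score pair beats $X$'s under the smallest-index order. Since $(X,Y)$ and $(Y,X)$ have the same law, $\Pp(E_{Y>X})=\tfrac12\Pp(I_X^\star\ne I_Y^\star)$. So it suffices to show $\Pp(A)\ge\Pp(E_{Y>X})$.

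The set $E_{Y>X}\setminus A$ equals $E_{Y>X}\cap\{I_Y^\star\notin\mathcal R,\ I_X^\star=a_n\}$. It lies entirely in your first case, because there $Y$'s best beats everything in $X$. Hence $\Phi$ maps it into $\{I_{X'}^\star\notin\mathcal R\}\subseteq A$. In the image, the first copy's best datum beats everything in the second copy, so the image is disjoint from $E_{Y>X}$, and hence from $E_{Y>X}\cap A$. This gives
$\Pp(A)\ge\Pp(E_{Y>X}\cap A)+\Pp(E_{Y>X}\setminus A)=\Pp(E_{Y>X})$, which is the lemma.

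Your bound $\Pp(I_Y^\star\ne I_X^\star)\le\Pp(A)+\Pp(B)$ charges both orderings of the two best pairs against $A$. The factor $\tfrac12$ lets you charge only the ordering your swap actually handles.
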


\begin{proof}
We write
\begin{equation}
        \mathcal A=\{J_{I_Y^\star}(X)\ne I_X^\star\}.
        \label{eq:A-def}
\end{equation}
Thus \eqref{eq:crossing} asks us to prove
\[\Pp(\mathcal A)\stackrel{?}{\ge}
\frac12\Pp(I_X^\star\ne I_Y^\star).
\]

To compare $I_X^\star$ and $I_Y^\star$, including when their scores tie, we use
the smallest-index convention in \eqref{eq:best-arm}.
For a posterior draw \(W\), let \(S_j^W\) denote the score of arm \(j\) in
that draw.  We order index--score pairs by declaring
\[
        (j,s)\succ(k,r)
\]
if \(s>r\), or if \(s=r\) and \(j<k\).  The best arm in \(W\) is precisely the
index of the largest pair \((j,S_j^W)\) under \(\succ\).

This order splits the event \(I_X^\star\ne I_Y^\star\) according to whether
the best-arm pair from \(X\) or the best-arm pair from \(Y\) is larger.  The
two cases have equal probability by exchangeability, which explains the
factor \(1/2\) in \eqref{eq:crossing}.  On the \(Y\)-larger outcomes outside
\(\mathcal A\), the swap below places \(Y_{I_Y^\star}\) in \(X'\), where
\(I_Y^\star\) becomes the best arm.

We define
\begin{equation}
        E_{Y>X}
        =
        \{I_X^\star\ne I_Y^\star
        \text{ and }(I_Y^\star,S_{I_Y^\star}^Y)
        \succ(I_X^\star,S_{I_X^\star}^X)\}.
        \label{eq:E-def}
\end{equation}
Since \((X,Y)\) and \((Y,X)\) have the same law and the two orderings partition
the event \(I_X^\star\ne I_Y^\star\),
\begin{equation}
        \Pp(E_{Y>X})
        =
        \frac12\Pp(I_X^\star\ne I_Y^\star)
        \label{eq:EYhalf}
\end{equation}
Thus it suffices to prove
\begin{equation}
        \Pp(\mathcal A)\stackrel{?}{\ge}\Pp(E_{Y>X}).
        \label{eq:A-dominates-E}
\end{equation}
We split $E_{Y>X}$ as
\[
        E_{Y>X}
        =
        (E_{Y>X}\cap\mathcal A)\,\dot\cup\,B,
        \qquad
        B:=E_{Y>X}\cap\mathcal A^c.
\]
The set \(E_{Y>X}\cap\mathcal A\) already lies in \(\mathcal A\).  We will map
\(B\) into \(\mathcal A\setminus E_{Y>X}\) without changing its probability.
The image of \(B\) and \(E_{Y>X}\cap\mathcal A\) will then be disjoint subsets
of \(\mathcal A\), proving \(\Pp(\mathcal A)\ge\Pp(E_{Y>X})\).

Recall that \(\mathcal R=\{a_0,\ldots,a_n\}\), where \(a_0=a\) supplies
\(F\)'s free observation and \(a_1,\ldots,a_n\) are the actions selected by
\(F\) on \(X\).
We define \((X',Y')=\Phi(X,Y)\) by keeping \((X_j,Y_j)\) fixed for
\(j\in\mathcal R\) and swapping \(X_j\) with \(Y_j\) for every
\(j\notin\mathcal R\):
\[
\begin{array}{c|cc}
& j\in\mathcal R & j\notin\mathcal R \\ \hline
X'_j & X_j & Y_j\\
Y'_j & Y_j & X_j .
\end{array}
\]
The set \(\mathcal R\) depends on \(X\).  Let \(\mathcal T\) be the finite
transcript of \(F\), consisting of its free observation and the \(n\)
action--observation pairs.  Since \(F\) is deterministic, conditioning on
\(\mathcal T\) fixes \(\mathcal R\), and no part of \(X_j\) has been observed for
\(j\notin\mathcal R\).  The product structure of \(\Pi\) and the independence
of \(Y\) therefore make the pairs \((X_j,Y_j)\), \(j\notin\mathcal R\),
conditionally independent and exchangeable.  Thus \(\Phi\) preserves the
conditional law, and hence the joint law, of \((X,Y)\).

Because \(\Phi\) leaves \(X_j\) unchanged for \(j\in\mathcal R\), the
transcript under \(X'\) is again \(\mathcal T\).  A second application of
\(\Phi\) therefore swaps the same coordinates and restores \((X,Y)\), so
\(\Phi^2=\mathrm{id}\).

Set
\[
        B'=\Phi(B)=\{\Phi(x,y):(x,y)\in B\}.
\]
Since \(\Phi\) is a probability-preserving involution,
\begin{equation}
        \Pp(B')=\Pp(B).
        \label{eq:swap-preserves-B}
\end{equation}
To prove \eqref{eq:A-dominates-E}, it remains to show
\begin{equation}
        B'\stackrel{?}{\subseteq}\mathcal A\setminus E_{Y>X}.
        \label{eq:swap-image-inclusion}
\end{equation}
Fix \((X,Y)\in B\), and write \((X',Y')=\Phi(X,Y)\).  We will verify
separately that \((X',Y')\in\mathcal A\) and that
\((X',Y')\notin E_{Y>X}\), which are the two requirements in
\eqref{eq:swap-image-inclusion}.

Because \(B=E_{Y>X}\cap\mathcal A^c\), the definition \eqref{eq:E-def} gives
\begin{equation}
        I_Y^\star\ne I_X^\star,
        \qquad
        (I_Y^\star,S_{I_Y^\star}^Y)
        \succ
        (I_X^\star,S_{I_X^\star}^X),
        \label{eq:B-order}
\end{equation}
while \eqref{eq:A-def} gives
\begin{equation}
        J_{I_Y^\star}(X)=I_X^\star.
        \label{eq:B-not-A}
\end{equation}
If \(I_Y^\star\in\mathcal R\), then \eqref{eq:J-def} would give
\(J_{I_Y^\star}(X)=I_Y^\star\).  Comparing this with
\eqref{eq:B-not-A} would force \(I_Y^\star=I_X^\star\), contrary to
\eqref{eq:B-order}.  Hence \(I_Y^\star\notin\mathcal R\), so the second case
of \eqref{eq:J-def} gives \(J_{I_Y^\star}(X)=a_n\).  Comparing again with
\eqref{eq:B-not-A}, we obtain
\begin{equation}
        I_Y^\star\notin\mathcal R,
        \qquad
        I_X^\star=a_n\in\mathcal R.
        \label{eq:best-arm-memberships}
\end{equation}

We next show that after the swap, the best arm lies on the \(X'\) side.  The
best arm under \(Y\) is better than every other arm under \(Y\).  By
\eqref{eq:B-order}, it is also better than the best arm under \(X\), and
hence than every arm under \(X\).  It is therefore best among all the arm
data appearing in \(X\) and \(Y\).  The swap only redistributes these data
between \(X'\) and \(Y'\).  Since \(I_Y^\star\notin\mathcal R\), the data for
this best arm move to \(X'\).  Thus it becomes the best arm under \(X'\) and
remains better than every arm under \(Y'\).  In particular,
\begin{equation}
        I_{X'}^\star=I_Y^\star\notin\mathcal R.
        \label{eq:best-under-Xprime}
\end{equation}
The definition \eqref{eq:E-def} therefore immediately gives
\((X',Y')\notin E_{Y>X}\).

Under \(X'\), \(F\) again selects \(a_1,\ldots,a_n\), so the set used to
define \(J_i(X')\) is again \(\mathcal R\).  Applying
\eqref{eq:J-in-R} with \(i=I_{Y'}^\star\) gives
\[
        J_{I_{Y'}^\star}(X')\in\mathcal R.
\]
On the other hand, \eqref{eq:best-under-Xprime} gives
\(I_{X'}^\star\notin\mathcal R\).  Therefore
\(J_{I_{Y'}^\star}(X')\ne I_{X'}^\star\), which by \eqref{eq:A-def} means
precisely that \((X',Y')\in\mathcal A\).
Since \((X,Y)\in B\) was arbitrary, we have proved
\eqref{eq:swap-image-inclusion}.

It follows that \(E_{Y>X}\cap\mathcal A\) and \(B'\) are disjoint subsets of
\(\mathcal A\), and therefore
\[
        \Pp(\mathcal A)
        \ge \Pp(E_{Y>X}\cap\mathcal A)+\Pp(B').
\]
Recall from \eqref{eq:swap-preserves-B} that \(\Pp(B')=\Pp(B)\) since $\Phi$ is 
probability preserving; and by the definition of
\(B\), the sets \(E_{Y>X}\cap\mathcal A\) and \(B\) partition \(E_{Y>X}\).
Therefore
\[
        \Pp(E_{Y>X}\cap\mathcal A)+\Pp(B')
        =
        \Pp(E_{Y>X}\cap\mathcal A)+\Pp(B)
        =
        \Pp(E_{Y>X}).
\]
Combining the last two displays and then applying \eqref{eq:EYhalf} yields, as desired,
\[
        \Pp(\mathcal A)\ge\Pp(E_{Y>X})
        =\frac12\Pp(I_X^\star\ne I_Y^\star)
        =C(q).\qedhere
\]
\end{proof}

\begingroup
\fontsize{9.3}{9.5}\selectfont
\raggedright
\bibliographystyle{alpha}
\bibliography{bib}
\endgroup

\end{document}